\newcommand{\model}{GRED}
\def\mI{{\bm{I}}}
\def\mW{{\bm{W}}}
\def\vs{{\bm{s}}}
\def\vx{{\bm{x}}}
\def\vz{{\bm{z}}}
\newcommand{\R}{\mathbb{R}}
\newcommand{\C}{\mathbb{C}}
\newcommand{\diag}{\mathrm{diag}}
\def\mLambda{{\bm{\Lambda}}}
\def\vlambda{{\bm{\lambda}}}
\definecolor{darkorange}{RGB}{190,95,50}
\definecolor{darkpink}{RGB}{180,50,120}
\definecolor{darkblue}{RGB}{40,80,120}
\theoremstyle{plain}
\newtheorem{theorem}{Theorem}[section]
\newtheorem{corollary}[theorem]{Corollary}
\theoremstyle{definition}
\theoremstyle{remark}
\icmltitlerunning{Recurrent Distance Filtering for Graph Representation Learning}
\begin{document}

\twocolumn[
\icmltitle{Recurrent Distance Filtering for Graph Representation Learning}

% It is OKAY to include author information, even for blind
% submissions: the style file will automatically remove it for you
% unless you've provided the [accepted] option to the icml2024
% package.

% List of affiliations: The first argument should be a (short)
% identifier you will use later to specify author affiliations
% Academic affiliations should list Department, University, City, Region, Country
% Industry affiliations should list Company, City, Region, Country

% You can specify symbols, otherwise they are numbered in order.
% Ideally, you should not use this facility. Affiliations will be numbered
% in order of appearance and this is the preferred way.
\icmlsetsymbol{equal}{*}

\begin{icmlauthorlist}
\icmlauthor{Yuhui Ding}{ethz}
\icmlauthor{Antonio Orvieto}{mpi}
\icmlauthor{Bobby He}{ethz}
\icmlauthor{Thomas Hofmann}{ethz}

%\icmlauthor{}{sch}
%\icmlauthor{Firstname8 Lastname8}{sch}
%\icmlauthor{Firstname8 Lastname8}{yyy,comp}
%\icmlauthor{}{sch}
%\icmlauthor{}{sch}
\end{icmlauthorlist}

\icmlaffiliation{ethz}{Department of Computer Science, ETH Zürich}
\icmlaffiliation{mpi}{ELLIS Institute Tübingen, MPI-IS, Tübingen AI Center}

\icmlcorrespondingauthor{Yuhui Ding}{yuhui.ding@inf.ethz.ch}

% You may provide any keywords that you
% find helpful for describing your paper; these are used to populate
% the "keywords" metadata in the PDF but will not be shown in the document
\icmlkeywords{Machine Learning, ICML, Graph Neural Networks}

\vskip 0.3in
]

% this must go after the closing bracket ] following \twocolumn[ ...

% This command actually creates the footnote in the first column
% listing the affiliations and the copyright notice.
% The command takes one argument, which is text to display at the start of the footnote.
% The \icmlEqualContribution command is standard text for equal contribution.
% Remove it (just {}) if you do not need this facility.

\printAffiliationsAndNotice{}  % leave blank if no need to mention equal contribution
%\printAffiliationsAndNotice{\icmlEqualContribution} % otherwise use the standard text.

\begin{abstract}
Graph neural networks based on iterative one-hop message passing have been shown to struggle in harnessing the information from distant nodes effectively.
Conversely, graph transformers allow each node to attend to all other nodes directly, but lack graph inductive bias and have to rely on ad-hoc positional encoding.
In this paper, we propose a new architecture to reconcile these challenges. Our approach stems from the recent breakthroughs in long-range modeling provided by deep state-space models: for a given target node, our model aggregates other nodes by their shortest distances to the target and uses a linear RNN to encode the sequence of hop representations. The linear RNN is parameterized in a particular diagonal form for stable long-range signal propagation and is theoretically expressive enough to encode the neighborhood hierarchy.
With no need for positional encoding, we empirically show that the performance of our model is comparable to or better than that of state-of-the-art graph transformers on various benchmarks, with a significantly reduced computational cost.
Our code is open-source at \url{https://github.com/skeletondyh/GRED}. 
\end{abstract}

\section{Introduction}
\label{sec:intro}
Graphs are ubiquitous for representing complex interactions between individual entities,
such as in social networks~\cite{tang2009social}, recommender systems~\cite{ying2018graph} and molecules~\cite{gilmer2017neural},
and have thus attracted a lot of interest from researchers seeking to apply deep learning to graph data.
Message passing neural networks (MPNNs)~\cite{gilmer2017neural} have been the dominant approach in this field.
These models iteratively update the representation of a target node by aggregating the representations of its neighbors. 
Despite progress in semi-supervised node classification tasks~\cite{kipf2016semi, velivckovic2017graph},
MPNNs have been shown to have difficulty in effectively harnessing the information of distant nodes~\cite{alon2020bottleneck, dwivedi2022long}.
To reach a node that is $k$ hops away from the target node,
an MPNN needs at least $k$ layers.
As a result, the receptive field for the target node grows exponentially with $k$,
including many duplicates of nodes that are close to the target node.
The information from such an exponentially growing receptive field is compressed into a fixed-size representation,
making it insensitive to the signals from distant nodes (a.k.a. over-squashing~\cite{topping2021understanding, di2023over}).
This limitation may hinder the application of MPNNs to tasks that require long-range reasoning.

\begin{figure}[t]
    \centering
    \includegraphics[width=0.8\linewidth]{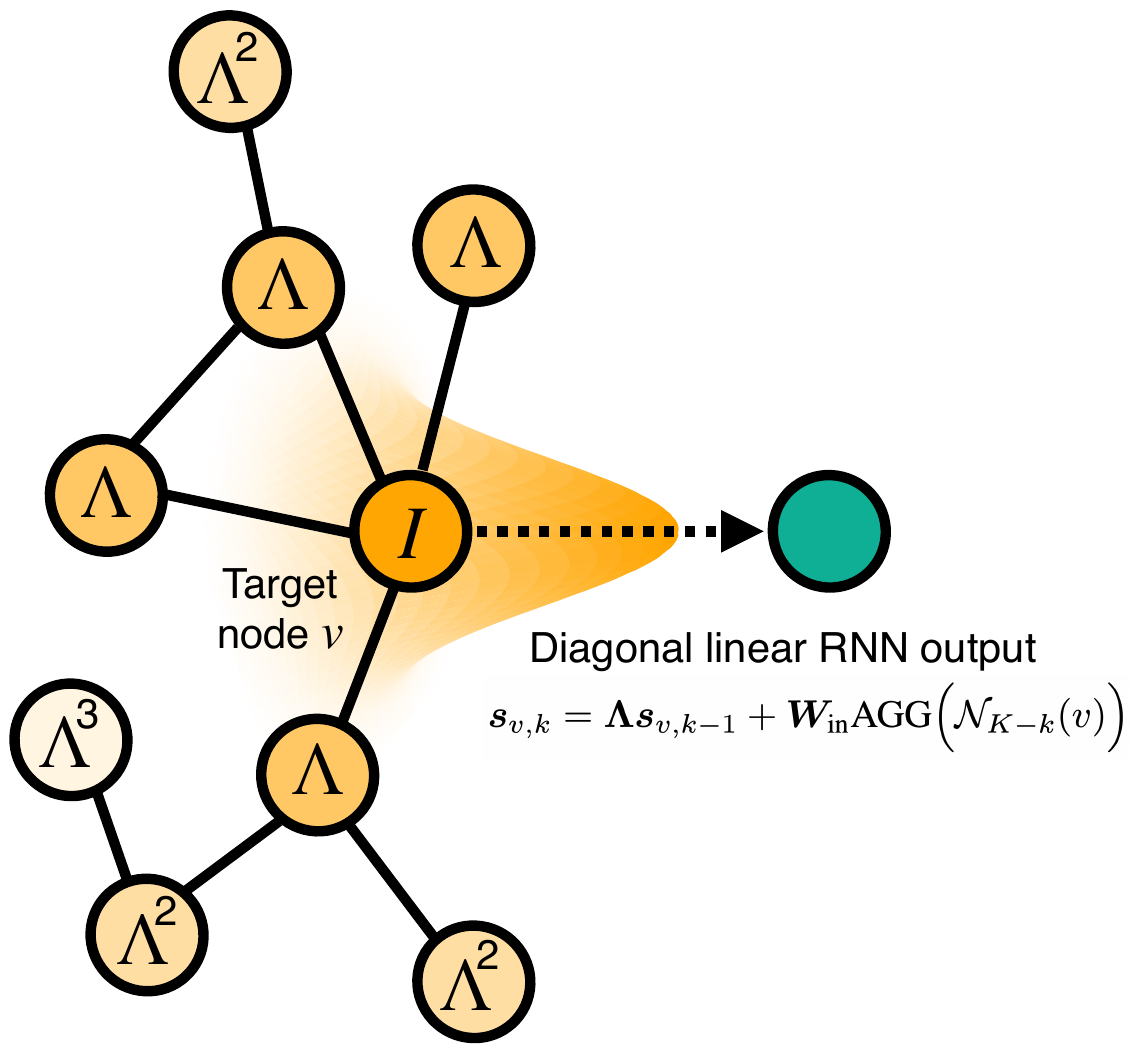}
    \caption{Illustration of the filtering effect on the neighborhood, induced by the linear RNN. The filter weight is determined by the eigenvalues $\mLambda$ of the transition matrix and the shortest distance to the target node. We expand on this in Section~\ref{sec:arch}.}
    \label{fig:filter}
    \vspace{-4mm}
\end{figure}

Inspired by the success of attention-based transformer architectures in modeling natural languages~\cite{vaswani2017attention, devlin2018bert} and images~\cite{dosovitskiy2020image},
several recent works have adapted transformers for graph representation learning to address the aforementioned issue~\cite{ying2021transformers, kim2022pure, chen2022structure, ma2023graph}.
Graph transformers allow each node to attend to all other nodes directly through a global attention mechanism, and therefore make the information flow between distant nodes easier.
However, a naive global attention mechanism alone doesn't encode any structural information about the underlying graph.
Hence, state-of-the-art graph transformers rely on ad hoc positional encoding (e.g., eigenvectors of the graph Laplacian) as extra features to incorporate
the graph inductive bias.
There is no consensus yet on the optimal type of positional encoding.
Which positional encoding to use and its associated hyper-parameters need to be tuned carefully~\cite{rampavsek2022recipe}.
Besides, while graph transformers have empirically shown improvement on some graph benchmarks compared with classical MPNNs,
the former are much more computationally expensive~\cite{dwivedi2022long}.

Captivated by the above challenges and the need for powerful, theoretically sound and computationally efficient approaches to graph representation learning,
we propose a new model, %\bobby{include name to make more direct, and put figure at top of page 2}
Graph Recurrent Encoding by Distance (GRED).
Each layer of our model consists of a permutation-invariant neural network~\cite{zaheer2017deep} and a linear recurrent neural network~\cite{orvieto2023resurrecting}
that is parameterized in a particular diagonal form following the recent advances in state space models~\cite{gu2022parameterization, smith2023simplified}.
To generate the representation for a target node,
our model categorizes all other nodes into multiple sets according to their shortest distances to the target node.
The permutation-invariant neural network generates a representation for each set of nodes that share the same shortest distance to the target node, and then the linear recurrent neural network encodes the sequence of the set representations,
starting from the set with the maximum shortest distance and ending at the target node itself.
Since the order of the sequence is naturally encoded by the recurrent neural network,
our model can encode the neighborhood hierarchy of the target node without the need for positional encoding. 
The architecture of \model~is illustrated in Figure~\ref{fig:architecture}.

The diagonal parameterization of the linear RNN~\cite{orvieto2023resurrecting} has been shown to 
make long-range signal propagation more stable than a vanilla RNN, and enables our model to effectively harness the information of distant nodes.
More specifically, it enables our model to directly learn the eigenvalues of the transition matrix,
which control how fast the signals from distant nodes decay as they propagate towards the target node (see \cref{fig:filter} for an illustration),
and at the same time allows efficient computation with parallel scans.
Furthermore,
while the use of a linear recurrent neural network is motivated by long-range signal propagation,
we theoretically prove its expressive power in terms of injective functions over sequences,
which is of independent interest,
and based on that we conclude that our model is more expressive than 1-WL~\cite{xu2018powerful}.
We evaluate our model on a series of graph benchmarks to support its efficacy.
The performance of our model is significantly better than that of MPNNs, 
and is comparable to or better than that of state-of-the-art graph transformers while
requiring no positional encoding and significantly reducing computation time.

To summarize,
the main contributions of our paper are as follows:

\begin{itemize}
    \item We propose a principled new model for graph representation learning that can effectively and efficiently harness the information of distant nodes.
    The architecture is composed of permutation-invariant neural networks and linear recurrent neural networks with diagonal parameterization.
    \item We theoretically prove that a linear recurrent neural network is able to express an injective mapping over sequences, which makes our architecture more expressive than 1-WL.
    \item Without the need for positional encoding, our model has achieved strong empirical performance on multiple widely used graph benchmarks, which is comparable to or better than that of state-of-the-art graph transformers, with higher training efficiency.
\end{itemize}

\section{Related Work}
We review below the literature on expanding MPNN's receptive field, including multi-hop MPNNs and graph transformers,
as well as current trends in recurrent models for long-range reasoning on sequential data.

\paragraph{Multi-hop MPNNs.}
Multi-hop MPNNs leverage the information of multiple hops for each layer.
Among existing works,
MixHop~\cite{abu2019mixhop} uses powers of the normalized adjacency matrix to access $k$-hop nodes.
$k$-hop GNN~\cite{nikolentzos2020k} iteratively applies MLPs to combine two consecutive hops and propagates information towards the target node.
\citeauthor{feng2022how}~\yrcite{feng2022how}~theoretically analyze the expressive power of general $k$-hop MPNNs and enhance it with subgraph information.
These works proved that higher-hop information can improve the expressiveness of MPNNs,
but they didn't address how to preserve long-range information during propagation as we do.
SPN~\cite{abboud2022shortest} is shown to alleviate over-squashing empirically.
It first aggregates neighbors of the same hop but simply uses weighted summation to combine hop representations, which cannot guarantee the expressiveness of the model.
On the contrary,
we prove that our model,
capable of modeling long-range dependency,
is also theoretically expressive.
PathNN~\cite{michel2023path} encodes each individual path that emanates from a node and aggregates these paths to compute the node representation.
DRew~\cite{gutteridge2023drew} gradually aggregates more hops at each layer and allows skip connections between different nodes.

\begin{figure*}[t]
\centering
\includegraphics[width=0.915\textwidth]{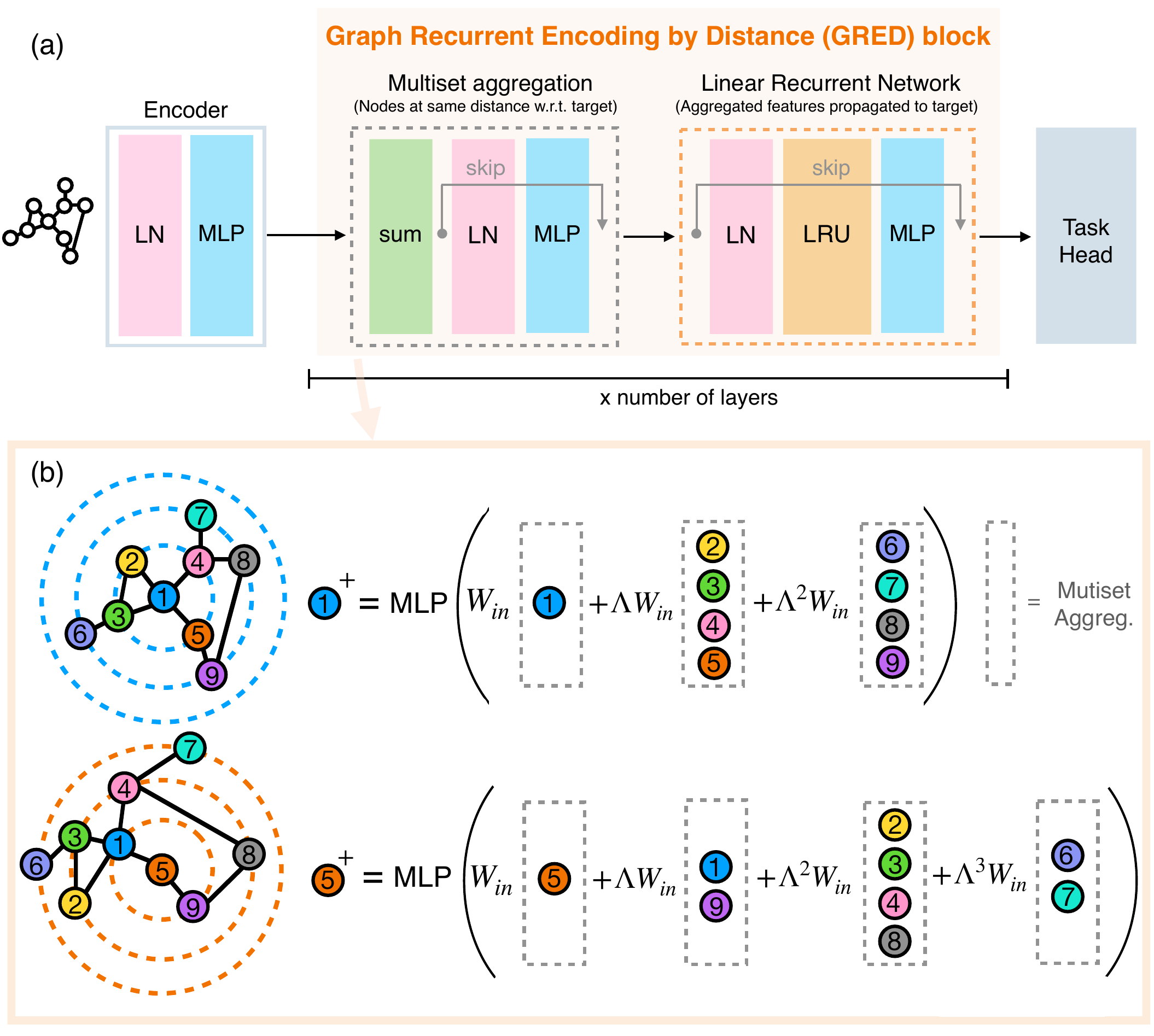}
\caption{(a) Sketch of the architecture. MLPs and Layer Normalization operate independently at each node or aggregated multiset. Information of the distant nodes is propagated to the target node through a linear RNN -- specifically an LRU~\cite{orvieto2023resurrecting}. (b) Depiction of the \model~layer operation for two different target nodes. The gray rectangular boxes indicate the application of multiset aggregation. Finally, the new representation for the target node is computed from the RNN output through an MLP.}
\label{fig:architecture}
\vspace{-2mm}
\end{figure*}

\paragraph{Graph transformers.}
Graph transformers~\cite{ying2021transformers, wu2021representing, chen2022structure, rampavsek2022recipe, zhang2023rethinking, ma2023graph} have recently attracted a lot of interest because the global attention mechanism allows each node to directly attend to all other nodes.
To bake in the graph structural information, graph transformers typically use positional encoding~\cite{li2020distance, dwivedi2021graph} as extra features.
More specifically,
Graphormer~\cite{ying2021transformers} adds learnable biases to the attention matrix for different shortest distances.
However, the sequential order of hops is not encoded into the model, and Graphormer still needs node degrees to augment node features.
SAT~\cite{chen2022structure} and GraphTrans~\cite{wu2021representing} stack message passing layers and self-attention layers together to obtain local information before the global attention.
\citeauthor{rampavsek2022recipe}~\yrcite{rampavsek2022recipe}~empirically compare different configurations of positional encoding, message passing and global attention.
\citeauthor{zhang2023rethinking}~\yrcite{zhang2023rethinking}~suggest the use of resistance distance as relative positional encoding. 
\citeauthor{ma2023graph}~\yrcite{ma2023graph}~use learnable positional encoding initialized with random walk probabilities.
\citeauthor{he2023generalization}~\yrcite{he2023generalization}~use MPNNs to encode graph patches generated by a graph clustering algorithm and apply MLP-Mixer~\cite{tolstikhin2021mlp}/ViT~\cite{dosovitskiy2020image}
to patch embeddings, but require node/patch positional encoding and selecting the number of patches.

\paragraph{State space models and linear RNNs.} Efficient processing of long sequences is one of the paramount challenges in contemporary deep learning. Attention-based transformers~\cite{vaswani2017attention} provide a scalable approach to sequential modeling but suffer from \textit{quadratically increasing inference/memory complexity} as the sequence length grows. 
While many approaches exist to alleviate this issue, like efficient memory management~\cite{dao2022flashattention,dao2023flashattention} and architectural modifications~\cite{wang2020linformer, kitaev2020reformer, child2019generating, beltagy2020longformer, wu2020lite}, the sequence length in modern large language models is usually kept to $2k/4k$ tokens for this reason~(e.g. Llama2~\cite{touvron2023llama}). On top of high inference and memory cost, the attention mechanism often does not provide the correct \textit{inductive bias} for long-range reasoning beyond text~\cite{tay2020long}. 
%Indeed, most transformers~(including long-range/sparse variants, reduced complexity variants, or variants with other tricks) are often found to perform poorly in discovering long-range dependencies in data~\cite{tay2020long}. 
Due to the issues outlined above, the community has witnessed the rise of innovative \textit{recurrent} alternatives to the attention mechanism, named state space models~(SSMs). The first SSM, S4, was introduced by \citeauthor{gu2021efficiently}~\yrcite{gu2021efficiently}~based on the theory of polynomial signal approximation~\cite{gu2020hippo,gu2023train} and significantly surpassed all modern transformer variants on the challenging long-range benchmark~\cite{tay2020long}.
Since then, a plethora of variants have been proposed~\cite{hasani2022liquid,gupta2022diagonal,smith2023simplified,peng2023rwkv}.
%These models achieve remarkable performance, surpassing all modern attention-based transformer variants by an average $20\%$ accuracy on challenging sequence classification tasks~\cite{tay2020long}. 
Deep SSMs have reached outstanding results in various domains, including language~\cite{fu2023hungry}, vision~\cite{nguyen2022s4nd} and audio~\cite{goel2022sashimi}.
%biological signals~\cite{gu2021efficiently}, reinforcement learning~\cite{lu2023structured} and online learning~\cite{zucchet2023online}. SSMs also were successfully applied to language modeling and are sometimes used in combination with attention~\cite{fu2023hungry, wang2022pretraining, ma2022mega}. 
At inference time, all SSMs coincide with a stack of linear RNNs, interleaved with position-wise MLPs and normalization layers.
The linearity of the RNNs enables fast parallel processing using FFTs~\cite{gu2021efficiently} or parallel scans~\cite{smith2023simplified}.  
The connection between SSMs and linear RNNs is reinforced by Linear Recurrent Unit (LRU)~\cite{orvieto2023resurrecting} that matches the performance of deep SSMs. While SSMs rely on the discretization of a structured continuous-time latent dynamical system, LRU is directly designed for a discrete-time system.
The main difference between LRU and a standard linear RNN is that LRU operates in the complex domain and its diagonal transition matrix is trained using polar parameterization for stable signal propagation.

\section{Architecture}
\label{sec:arch}

In this section, we present the \model~layer, which is the building unit of our architecture.
We start with some preliminary notations and then describe how our layer computes node representations.
Finally, we analyze its computational complexity. 
%We defer the discussion on expressive power of our module to Section~\ref{sec:expressiveness}.

\paragraph{Preliminaries.}
Let $G = (V, E)$ denote an undirected and unweighted graph,
where $V$ denotes the set of nodes and $E$ denotes the set of edges. 
For any two nodes $v, u \in V$, we use $d(v, u)$ to represent the shortest distance between $v$ and $u$, and we let $d(v, v) = 0$. 
For each target node $v$,
we categorize all other nodes into different hops according to their shortest distances to $v$:
\begin{align}
    \mathcal{N}_k(v) = \{u ~|~ d(v, u) = k\} \quad \text{for}\quad k = 0, 1, \dots, K
\end{align}
where $K$ is the diameter of $G$ or a hyper-parameter specified for the task in hand. 
$\{\mathcal{N}_k(v)\}_{k=1}^K$ can be obtained for every node $v \in V$ by running the Floyd–Warshall algorithm~\cite{floyd1962algorithm, warshall1962theorem} in parallel during data preprocessing and they are saved as masks.

\paragraph{GRED layer.}
The input to the $\ell$-th layer is a set of node representations $\{\!\{\bm{h}_v^{(\ell-1)} \in \mathbb{R}^d ~|~v \in V\}\!\}$.
To compute the output representation $\bm{h}_v^{(\ell)}$ of this layer for a generic target node $v$,
the layer first generates a representation for each set of nodes that share the same shortest distance to $v$~(grey dashed boxes in Figure~\ref{fig:architecture}):
\begin{align}
    \bm{x}_{v, k}^{(\ell)} = \text{AGG} \left( \big\{\!\!\big\{ \bm{h}_u^{(\ell-1)} ~|~ u \in \mathcal{N}_k(v) \big\}\!\!\big\}\right) %\quad \text{for} \quad k = 0, 1, \dots, K
\end{align}
where $\{\!\{\cdot\}\!\}$ denotes a multiset, and AGG is an \textit{injective} multiset function which we parameterize with two wide multi-layer perceptrons (MLPs)\footnote{In practice, with just one hidden layer.}, as usual in the literature~\cite{zaheer2017deep,xu2018powerful}:
%represents a permutation-invariant function that operates on sets~\cite{zaheer2017deep}, which we parametrize, as usual in the literature~(\antonio{cite}), with two MLPs:
\begin{equation}
    \label{eq:deepsets}
    \bm{x}_{v, k}^{(\ell)} = \text{MLP}_2 \left( \sum\nolimits_{u \in \mathcal{N}_k(v)} \text{MLP}_1 \left( \bm{h}_{u}^{(\ell - 1)}\right)\right) \in \mathbb{R}^d.
\end{equation}

These set representations $(\bm{x}_{v, 0}^{(\ell)}, \bm{x}_{v, 1}^{(\ell)}, \dots, \bm{x}_{v, K}^{(\ell)})$ naturally form a sequence according to the
shortest distances.
Then we encode this sequence using a linear RNN:
\begin{align}
    \label{eq:GRB}
    \bm{s}_{v, k}^{(\ell)} = \bm{A}\bm{s}_{v, k-1}^{(\ell)} + \bm{B}\bm{x}_{v, K - k}^{(\ell)} \quad \text{for} \quad k = 0, \ldots, K
\end{align}
where $\bm{s}_{v, k}^{(\ell)} \in \mathbb{R}^{d_s}$ represents the hidden state of the RNN and $\bm{s}_{v, -1}^{(\ell)} = \bm{0}$.
$\bm{A}\in{\mathbb R}^{d_s \times d_s}$ denotes the state transition matrix and $\bm{B} \in \mathbb{R}^{d_s \times d}$ is a matrix to transform the input of the RNN.
Here in~\cref{eq:GRB} the RNN encoding starts from $\bm{x}_{v, K}^{(\ell)}$, proceeds from right to left, and ends at $\bm{x}_{v, 0}^{(\ell)}$,
which corresponds to the signals from distant nodes propagating towards the target node.
The neighborhood hierarchy of the target node $v$ would then be encoded into the final hidden state $\bm{s}_{v, K}^{(\ell)}$ of the RNN.
Note that as in Figure~\ref{fig:architecture}(b), different nodes have different sequences to describe their respective neighborhoods,
and the RNN computations for all nodes can be batched.
Although for a particular target node,
some edges between hop $k$ ($k \geq 1$) and hop $k + 1$ are omitted by converting its neighborhood into a sequence,
those edges would be taken into account for other target nodes.
Therefore,
considering all node representations as a whole,
our model preserves the full graph structural information.
We theoretically prove the expressiveness of the linear RNN and our model in Section~\ref{sec:expressiveness}.

In our model,
we parameterize the linear RNN in a particular diagonal form.
Recall that, over the space of $d_s\times d_s$ non-diagonal real matrices, the set of non-diagonalizable~(in the complex domain) matrices has measure zero~\cite{bhatia2013matrix}. Hence, with probability one over random initializations, $\bm{A}$ is diagonalizable, i.e., $\bm{A} = \bm{V}\bm{\Lambda}\bm{V}^{-1}$, where $\bm{\Lambda} = \text{diag}(\lambda_1, \dots, \lambda_{d_s}) \in \mathbb{C}^{d_s \times d_s}$ gathers the eigenvalues of $\bm{A}$, and columns of $\bm{V}$ are the corresponding eigenvectors. \cref{eq:GRB} is then equivalent to the following diagonal recurrence in the complex domain, up to a linear transformation of the hidden state $\vs$ which can be merged with the output projection $\mW_{\text{out}}$ (\cref{eq:output}):
\begin{align}
    \label{eq:diag}
    \bm{s}_{v, k}^{(\ell)} = \bm{\Lambda}\bm{s}_{v, k - 1}^{(\ell)} + \bm{W}_{\text{in}}\bm{x}_{v, K - k}^{(\ell)}
\end{align}
where $\bm{W}_{\text{in}} = \bm{V}^{-1}\bm{B} \in \mathbb{C}^{d_s \times d}$.
Unrolling the recurrence, we have:
\begin{align}
    \label{eq:diag_full}
    \vs_{v, K}^{(\ell)}=\sum_{k=0}^{K} \mLambda^k\mW_{\text{in}} \vx_{v, k}^{(\ell)}.
\end{align}
\cref{eq:diag_full} can be thought of as a \textit{filter over the hops from the target node} (\cref{fig:filter}),
and the filter weights are determined by the magnitudes of the eigenvalues $\bm{\Lambda}$ and the shortest distances to the target node.
Following the modern literature on deep SSMs~\cite{gupta2022diagonal, gu2022parameterization}, we directly initialize~(without loss of generality) the system in the diagonal form and have $\mLambda$ and $\mW_{\text{in}}$ as trainable parameters\footnote{As done in all state-space models~\cite{gu2021efficiently, smith2023simplified}, we do not optimize over the complex numbers but instead parameterize, for instance, real and imaginary components of $\mW_{\text{in}}$ as real parameters. The imaginary unit $i$ is then used to aggregate the two components in the forward pass.}. 
To guarantee stability (the eigenvalues should be bounded by the unit disk), we adopt the recently introduced LRU initialization~\cite{orvieto2023resurrecting} that parameterizes the eigenvalues with log-transformed polar coordinates.
Through directly learning eigenvalues $\mLambda$,
our model learns to control the influence of signals from distant nodes on the target node, and thus addresses over-squashing caused by iterative 1-hop mixing.
Another advantage of the diagonal linear recurrence is that it can leverage parallel scans~\cite{blelloch1990prefix, smith2023simplified} to avoid computing $\vs$ sequentially on modern hardware.

The output representation $\bm{h}_v^{(\ell)}$ is generated by a non-linear transformation of the last hidden state $\bm{s}_{v, K}^{(\ell)}$:
\begin{equation}
    \label{eq:output}
    \bm{h}_v^{(\ell)} = \text{MLP}_3\left(\Re \left[ \bm{W}_{\text{out}} \bm{s}_{v, K}^{(\ell)} \right] \right)
\end{equation}
where $\bm{W}_{\text{out}} \in \mathbb{C}^{d \times d_s}$ is a trainable weight matrix and
$\Re [\cdot]$ denotes the real part of a complex-valued vector. 
While sufficiently wide MLPs with one hidden layer can parameterize any non-linear mapping, following again the literature on state-space models we choose to place here a gated linear unit (GLU)~\cite{dauphin2017language}: GLU$(\vx) = (\mW_1 \vx)\odot \sigma(\mW_2 \vx)$, with $\sigma$ the sigmoid function and $\odot$ the element-wise product.

The final architecture is composed of stacking several of such layers described above. In practice, we merge $\text{MLP}_1$ in \cref{eq:deepsets} with the non-linear transformation in \cref{eq:output} of the previous layer~(or of the feature encoder) to make the entire architecture more compact.
We add skip connections to both the MLP and the LRU and apply layer normalization to the input of each residual branch.
The overall architecture is illustrated in Figure~\ref{fig:architecture}(a).

\paragraph{Computational complexity.}
For each distance $k$,
the complexity of aggregating the representations of nodes from $\mathcal{N}_k(v)$ for every $v \in V$
is that of one round of message passing, which is $O(|E|)$.
So the total complexity of \cref{eq:deepsets} for all nodes and distances is $O(K|E|)$.
In practice,
since $\{\mathcal{N}_k(v)\}_{k=1}^K$ are pre-computed, \cref{eq:deepsets} for different $k$'s can be performed in parallel to speed up the computation.
The sequential computation of \cref{eq:diag} has total complexity $O(K|V|)$.
However, the linearity of the recurrence and the diagonal state transition matrix enable fast parallel scans to further improve the efficiency.
In the above analysis,
$K$ is upper bounded by the graph diameter, which is usually much smaller than the number of nodes in real-world datasets.
Even in the worst case where the diameter is large,
we can keep the complexity of each layer tractable with a smaller constant $K$ and still access the global information by ensuring the product of model depth and $K$ is no smaller than the diameter.
As a result of the compact and parallelizable architectural design,
our model is highly efficient during training,
as evidenced by our experimental results.

\section{Expressiveness Analysis}
\label{sec:expressiveness}

In this section,
we theoretically analyze the expressive capabilities of the linear RNN (\cref{eq:diag}) and the overall model.
Wide enough linear RNNs have been shown to be able to approximate convolutional filters~\cite{li2022approximation}, and model non-linear dynamic systems when interleaved with MLPs~\cite{orvieto2023universality}.
In the context of this paper,
we are interested in whether the linear RNN can accurately encode the sequence of hop representations (generated by \cref{eq:deepsets})
that describes the neighborhood hierarchy of the target node.
To answer this question,
in the following,
we prove that if the hidden state is large enough,
a linear RNN can express an injective mapping over sequences:

\begin{table*}[t]
    \centering
    \caption{Test classification accuracy (in percent) of our model in comparison with baselines. Performance of baselines is reported by the benchmark~\cite{dwivedi2023benchmarking} or their original papers.
    ``-'' indicates the baseline didn't report its performance on that dataset. We follow the parameter budget $\approx$ 500K.}
    \label{tab:accuracy}
    \vspace{1mm}
    \begin{tabular}{lcccc}
    \toprule
    Model    & MNIST                & CIFAR10                 & PATTERN           & CLUSTER           \\ \midrule
    GCN~\cite{kipf2016semi}     & 90.705$\pm$0.218     & 55.710$\pm$0.381        & \textcolor{darkpink}{\textbf{85.614$\pm$0.032}}  & 69.026$\pm$1.372  \\
    GAT~\cite{velivckovic2017graph}      & 95.535$\pm$0.205     & 64.223$\pm$0.455        & 78.271$\pm$0.186  & 70.587$\pm$0.447  \\
    GIN~\cite{xu2018powerful}      & 96.485$\pm$0.252     & 55.255$\pm$1.527        & 85.590$\pm$0.011  & 64.716$\pm$1.553  \\
    GatedGCN~\cite{bresson2017residual} & \textcolor{darkpink}{\textbf{97.340$\pm$0.143}}     & \textcolor{darkpink}{\textbf{67.312$\pm$0.311}}        & 85.568$\pm$0.088  & \textcolor{darkpink}{\textbf{73.840$\pm$0.326}}  \\ \midrule
    EGT~\cite{hussain2022global}      & 98.173$\pm$0.087     & 68.702$\pm$0.409        & 86.821$\pm$0.020  & 79.232$\pm$0.348  \\
    SAN~\cite{kreuzer2021rethinking}      & - & - & 86.581$\pm$0.037 & 76.691$\pm$0.65 \\ 
    SAT~\cite{chen2022structure}      & - & - & 86.848$\pm$0.037 & 77.856$\pm$0.104 \\
    GPS~\cite{rampavsek2022recipe}      & 98.051$\pm$0.126 &  72.298$\pm$0.356 & 86.685$\pm$0.059& 78.016$\pm$0.180 \\
    Graph MLP-Mixer~\cite{he2023generalization} & \textcolor{darkblue}{\textbf{98.320$\pm$0.040}} & 73.960$\pm$0.330 & - & - \\
    GRIT~\cite{ma2023graph}     & 98.108$\pm$0.111 & \textcolor{darkblue}{\textbf{76.468$\pm$0.881}} & \textcolor{darkblue}{\textbf{87.196$\pm$0.076}} & \textcolor{darkblue}{\textbf{80.026$\pm$0.277}} \\ \midrule
    \model~(Ours) & \textcolor{darkorange}{\textbf{98.383$\pm$0.012}} & \textcolor{darkorange}{\textbf{76.853$\pm$0.185}} & \textcolor{darkorange}{\textbf{86.759$\pm$0.020}} & \textcolor{darkorange}{\textbf{78.495$\pm$0.103}}\\ \bottomrule                               
    \end{tabular}
\end{table*}

\begin{theorem}[Injectivity of linear RNNs]
\label{thm:injection}
Let $\{\vx_v=(\vx_{v,0}, \vx_{v,1}, \vx_{v,2}, \dots, \vx_{v,K_v})~|~v \in V\}$ be a set of sequences (of different lengths $K_v\le K$) of vectors with a (possibly uncountable) set of features $\mathcal{X}\subset\R^d$. Consider a diagonal linear complex-valued RNN with $d_s$-dimensional hidden state, parameters $\mLambda\in\text{diag}(\C^{d_s}), \mW_{in}\in\C^{d_s\times d}$ and recurrence rule $\vs_{v,k} = \mLambda \vs_{v,k-1} + \mW_{in} \vx_{v,K_v-k}$,  initialized at $\vs_{v,-1}={\bm 0}\in\R^{d_s}$ for each $v\in V$. If $d_s\ge (K+1)d$, then there exist $\mLambda, \mW_{in}$ such that the map $R:(\vx_{v,0},\vx_{v,1}, \vx_{v,2}, \dots, \vx_{v,K}) \mapsto \vs_{v,K}$~(with zero right-padding if $K_v<K$) is bijective. Moreover, if the set of RNN inputs has countable cardinality $|\mathcal{X}|=N\le\infty$, then selecting $d_s\ge d$ is sufficient for the existence of an injective linear RNN mapping $R$.
\end{theorem}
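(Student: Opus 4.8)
The plan is to first unroll the recurrence into the closed form already recorded in \cref{eq:diag_full}, and then recast injectivity as a statement about a single stacked matrix. Writing $\mathrm{vec}(\vx_v)=(\vx_{v,0};\vx_{v,1};\dots;\vx_{v,K})\in\R^{(K+1)d}$ for the zero-padded sequence, one gets $\vs_{v,K}=\mPhi\,\mathrm{vec}(\vx_v)$ with the block matrix $\mPhi=[\,\mW_{\text{in}}\mid\mLambda\mW_{\text{in}}\mid\cdots\mid\mLambda^{K}\mW_{\text{in}}\,]\in\C^{d_s\times(K+1)d}$. Since padding is injective on the given set, injectivity of $R$ reduces to $\mPhi$ having trivial kernel on real inputs, for which a clean sufficient condition is that $\mPhi$ has full column rank $(K+1)d$ over $\C$: a trivial complex kernel contains the real one, so $\mPhi\vz={\bm 0}$ with real $\vz$ forces $\vz={\bm 0}$. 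Full column rank requires $d_s\ge(K+1)d$, matching the hypothesis, and when it holds $R$ is a linear bijection onto its image.

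For the uncountable case $\mathcal{X}=\R^d$ I would construct $\mLambda,\mW_{\text{in}}$ explicitly so that $\mPhi$ is invertible. Index the $d_s=(K+1)d$ hidden coordinates by pairs $(m,j)$ with $m\in\{0,\dots,K\}$ and $j\in\{1,\dots,d\}$; set the eigenvalue in coordinate $(m,j)$ to $\lambda_{m,j}$ and let row $(m,j)$ of $\mW_{\text{in}}$ equal $c_{m,j}\ve_j^{\top}$. Because $\mLambda^{k}$ scales row $(m,j)$ by $\lambda_{m,j}^{k}$, the entries of $\mPhi$ vanish off the diagonal feature blocks $j=j'$, and after reordering $\mPhi$ becomes block diagonal with $d$ blocks, the $j$-th being the $(K+1)\times(K+1)$ matrix with entries $c_{m,j}\lambda_{m,j}^{k}$. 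Choosing, for each $j$, the values $\lambda_{0,j},\dots,\lambda_{K,j}$ distinct (and inside the unit disk, for stability) and all $c_{m,j}\ne 0$ makes every block a scaled Vandermonde matrix, hence invertible, so $\mPhi$ is invertible and $R$ is injective. I expect this step to be the main obstacle and the crux of the whole argument: the work is in \emph{exposing} the Vandermonde structure, i.e.\ realizing that the powers $\mLambda^{k}$ act on a fixed $\mW_{\text{in}}$ like monomials $\lambda^{k}$, so that confining each hidden coordinate to a single input feature decouples $\mPhi$ into blocks whose invertibility is classical.

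For the countable case I would switch to a genericity argument and only demand $d_s\ge d$. There are countably many ordered pairs of distinct padded sequences $(\vx,\vy)$; for each, set $\vz_k=\vx_k-\vy_k$, not all zero. The parameters collapsing a given pair form the zero set of the polynomial map $(\mLambda,\mW_{\text{in}})\mapsto\sum_{k=0}^{K}\mLambda^{k}\mW_{\text{in}}\vz_k$ in the real and imaginary parts of the entries. I would show this polynomial is not identically zero: pick $\mW_{\text{in}}$ of full column rank (possible since $d_s\ge d$), so that for some index $j$ with $\vz_j\ne{\bm 0}$ some coordinate $i$ of $\mW_{\text{in}}\vz_j$ is nonzero; then the $i$-th output coordinate is $p_i(\lambda_i)$ with $p_i$ a nonzero polynomial of degree at most $K$ (its degree-$j$ coefficient is nonzero), which has finitely many roots, so some $\lambda_i$ gives a nonzero value. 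A nonzero real polynomial vanishes on a Lebesgue-null set, so each pair's bad set is null, the countable union remains null, and almost every $(\mLambda,\mW_{\text{in}})$ — in particular some choice with eigenvalues in the unit disk — makes $R$ injective.
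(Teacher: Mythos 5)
Your proof is correct, and it splits cleanly into one half that mirrors the paper and one half that does not. For the bijective case ($d_s\ge(K+1)d$, uncountable features), your construction is essentially the paper's own: your $\mW_{\text{in}}$ with rows $c_{m,j}\ve_j^{\top}$ is, up to the harmless scalars $c_{m,j}$, exactly the paper's choice $\mW_{\text{in}}=\mI_{d\times d}\otimes\mathbf{1}_{(K+1)\times 1}$, and both arguments then reduce to the invertibility of $(K+1)\times(K+1)$ Vandermonde blocks with distinct nodes.

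The countable/injective case is where you take a genuinely different route. The paper fixes $\mW_{\text{in}}=\mI_{d\times d}$, reduces to scalar filters, and must then exhibit a single $\lambda\in\R$ whose moment vector $\bar{\vlambda}=(1,\lambda,\dots,\lambda^{K})$ avoids the countable union of hyperplanes $\mathcal{Z}_\perp$; because $\bar{\vlambda}$ is confined to a one-dimensional curve, the fact that $\mathcal{Z}_\perp$ is Lebesgue-null in $\R^{K+1}$ cannot be invoked directly, and the paper needs an extra geometric step (any $K+1$ distinct points on the moment curve are linearly independent by the Vandermonde determinant, so no hyperplane, hence no countable union of them, can contain the curve). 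You instead randomize over the \emph{joint} parameter space $(\mLambda,\mW_{\text{in}})$: for each of the countably many pairs of distinct padded sequences, the collapsing parameters form the zero set of a polynomial map, which you show is not identically zero by exhibiting a witness (full-column-rank $\mW_{\text{in}}$, then a nonvanishing degree-$\le K$ scalar polynomial in $\lambda_i$ whose $j$-th coefficient is $(\mW_{\text{in}}\vz_j)_i\neq 0$); each such set is therefore null, and a countable union of null sets is null. This sidesteps the ``structured vector'' obstruction that the paper has to work around, since genericity in the full parameter space is unconstrained, and it yields a strictly stronger conclusion: almost every parameter choice (in particular, some choice with eigenvalues inside the unit disk) gives injectivity. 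What the paper's route buys in exchange is explicitness: a concrete $\mW_{\text{in}}=\mI$ and a single shared real eigenvalue, which ties more directly to the filtering interpretation of $\mLambda$ in the main text. One shared gloss in both proofs: injectivity can only hold modulo the zero-padding identification (a sequence ending in genuine zero vectors collides with its truncation), a caveat the paper relegates to the remark after the theorem rather than the proof itself.
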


The proof can be found in \cref{sec:proof}.
Here we assume zero-padding for $K_v < K$ (for mini-batch training).
If some nodes coincidentally have zero-valued features, we can select a special token which is not in the
dictionary of node features as the padding token. In practice, such an operation is not necessary because node
representations are first fed into an MLP before the linear RNN, which can learn to shift them away from zero.

Based on \cref{thm:injection}, and the well-known conclusion that the parameterization given by \cref{eq:deepsets} can express an injective multiset function~\cite{xu2018powerful},
we have the following corollary:

\begin{corollary}
A wide enough \model~layer is capable of expressing an injective mapping of the list
$(\bm{h}_v, \{\!\!\{\bm{h}_u ~|~ u \in \mathcal{N}_1(v)\}\!\!\}, \{\!\!\{\bm{h}_u ~|~ u \in \mathcal{N}_2(v)\}\!\!\}, \dots, \{\!\!\{\bm{h}_u ~|~ u \in \mathcal{N}_{K_v}(v)\}\!\!\})$ for each $v \in V$.
\end{corollary}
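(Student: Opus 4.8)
The plan is to prove the corollary by exhibiting the \model~layer as a composition of two injective maps and invoking the elementary fact that a composition of injections is itself injective. Concretely, the layer first applies the multiset aggregator AGG of \cref{eq:deepsets} independently to each hop, and then feeds the resulting length-$(K_v{+}1)$ sequence into the linear RNN of \cref{eq:diag}, whose injectivity is the content of \cref{thm:injection}.

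First I would fix the aggregation step. By the standard DeepSets argument~\cite{xu2018powerful}, the parameterization in \cref{eq:deepsets} realizes an injective multiset function over multisets drawn from a countable feature set; applying a single such AGG to each hop in parallel therefore gives an injective map from the list $(\{\!\!\{\bm{h}_u \mid u \in \mathcal{N}_0(v)\}\!\!\}, \dots, \{\!\!\{\bm{h}_u \mid u \in \mathcal{N}_{K_v}(v)\}\!\!\})$ to the sequence $(\vx_{v,0}, \dots, \vx_{v,K_v})$. Since $\mathcal{N}_0(v) = \{v\}$ is a singleton, $\vx_{v,0}$ is an injective image of $\bm{h}_v$, matching the first entry of the list in the corollary. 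Collecting all aggregator outputs defines a feature set $\mathcal{X} \subset \R^d$; because the input node representations range over a countable set, $\mathcal{X}$ is countable as well.

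Next I would invoke \cref{thm:injection} with this $\mathcal{X}$ as the RNN feature set. As $\mathcal{X}$ is countable, the theorem supplies parameters $\mLambda, \mW_{\text{in}}$ with $d_s \ge d$ for which the RNN map $R:(\vx_{v,0}, \dots, \vx_{v,K_v}) \mapsto \vs_{v,K}$ (with zero right-padding) is injective; crucially the order of selection is non-circular, since $\mathcal{X}$ is determined entirely by the already-fixed AGG, and only then are the RNN parameters chosen. Composing the hop-wise injective aggregation with $R$ shows that the map from the list to the final hidden state $\vs_{v,K}$ is injective. Finally, since $\vs_{v,K}$ ranges over a countable set, a generic choice of $\mW_{\text{out}}$ makes $\Re[\mW_{\text{out}} \vs_{v,K}]$ injective on that set, and choosing $\text{MLP}_3$ in \cref{eq:output} injective (e.g.\ a sufficiently wide GLU) preserves injectivity through to the layer output $\bm{h}_v^{(\ell)}$.

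I expect the main subtlety to lie in the output stage rather than in the composition itself. The map $\Re[\mW_{\text{out}}\,\cdot\,]$ sends $\C^{d_s}\cong\R^{2d_s}$ into $\R^d$ and is in general dimension-reducing, so injectivity on a full-dimensional (uncountable) domain would fail outright; it is precisely the countability of $\mathcal{X}$ — and hence of the reachable hidden states — that rescues injectivity via a generic projection, mirroring the countable branch of \cref{thm:injection}. A secondary point worth verifying is the consistency of the zero-padding across the variable lengths $K_v$: one must ensure padded and unpadded sequences are never conflated, which holds because the MLPs inside AGG can shift representations away from zero (or a reserved padding token can be used), exactly as discussed in the remark following \cref{thm:injection}.
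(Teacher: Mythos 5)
Your proposal is correct and follows essentially the same route as the paper, which derives the corollary in one step by composing the hop-wise injective multiset aggregator of \cref{eq:deepsets} (via the standard argument of \citeauthor{xu2018powerful}) with the injective linear RNN supplied by \cref{thm:injection}. Your extra steps — propagating countability to the aggregator outputs, fixing AGG before choosing the RNN parameters, and using a generic $\mW_{\text{out}}$ plus an injective $\text{MLP}_3$ to carry injectivity through \cref{eq:output} — are sound refinements of details the paper leaves implicit rather than a different argument.
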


This corollary in turn implies the following result:

%Assuming wide enough architectural components, then the RNN output at any node $v\in V$, in combination with an injective multiset function AGG aggregating neighbors, is an injective function of the list $(\bm{h}_v, \{\!\!\{\bm{h}_u ~|~ u \in \mathcal{N}_1(v)\}\!\!\}, \{\!\!\{\bm{h}_u ~|~ u \in \mathcal{N}_2(v)\}\!\!\}, \dots, \{\!\!\{\bm{h}_u ~|~ u \in \mathcal{N}_{K_v}(v)\}\!\!\})$. 

\begin{corollary}[Expressiveness of \model]
When $K > 1$, one wide enough \model~layer is more expressive than any 1-hop message passing layer.
\end{corollary}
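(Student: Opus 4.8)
The plan is to interpret ``more expressive'' in the standard sense used for 1-WL-style comparisons, namely \emph{distinguishing power}: a layer type $A$ is at least as expressive as a layer type $B$ if, for any two target nodes that $B$ maps to different representations, $A$ also maps them to different representations; and $A$ is strictly more expressive if in addition there is a pair of target nodes that $A$ separates but no instance of $B$ can. I would therefore establish two directions, relying throughout on the previous corollary: a wide enough \model~layer can realize an injective map of the list $L(v) = (\bm{h}_v, \{\!\!\{\bm{h}_u ~|~ u \in \mathcal{N}_1(v)\}\!\!\}, \dots, \{\!\!\{\bm{h}_u ~|~ u \in \mathcal{N}_{K_v}(v)\}\!\!\})$.

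First I would show that a \model~layer is at least as expressive as any 1-hop message passing layer. Any such layer computes an output of the form $\bm{h}_v^{\text{new}} = \phi(\bm{h}_v, \text{AGG}\{\!\!\{\bm{h}_u ~|~ u \in \mathcal{N}_1(v)\}\!\!\})$, so its output is a function only of the pair $(\bm{h}_v, \{\!\!\{\bm{h}_u ~|~ u \in \mathcal{N}_1(v)\}\!\!\})$, which is exactly the two leading entries of $L(v)$. Hence if a 1-hop layer assigns different outputs to two nodes $v, v'$, this pair must already differ, and therefore $L(v) \neq L(v')$. Since the injective \model~map separates whenever the lists differ, it separates $v$ and $v'$ as well. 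The same observation yields the stronger function-level claim: because the leading pair can be recovered from the injective representation, the final MLP of the layer can be chosen to reproduce any given $\phi$.

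Next I would exhibit strict separation, which is where the hypothesis $K>1$ enters. I would construct two target nodes with identical self-features and identical 1-hop multisets but distinct 2-hop multisets; for instance, in a graph of uniform node features, take $v$ whose unique neighbor has no further neighbors, versus $v'$ whose unique neighbor has an additional pendant node. Then $\bm{h}_v = \bm{h}_{v'}$ and $\{\!\!\{\bm{h}_u ~|~ u \in \mathcal{N}_1(v)\}\!\!\} = \{\!\!\{\bm{h}_u ~|~ u \in \mathcal{N}_1(v')\}\!\!\}$, so every 1-hop layer is forced to give them identical outputs, whereas $\{\!\!\{\bm{h}_u ~|~ u \in \mathcal{N}_2(v)\}\!\!\} = \emptyset \neq \{\!\!\{\bm{h}_u ~|~ u \in \mathcal{N}_2(v')\}\!\!\}$, so $L(v) \neq L(v')$ and the injective \model~map separates them.

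The main obstacle I anticipate is less in the core argument than in making the formalization airtight. I would need to pin down the comparison precisely (a single layer versus a single layer, over the same input feature space), verify that the realizability conditions of \cref{thm:injection} on the hidden-state dimension are met on the relevant finite input domain, and handle empty or zero-padded hops carefully so that an empty 2-hop multiset is genuinely distinguished from a nonempty one by the injective map rather than being collapsed by the padding convention; concretely, this amounts to checking that the aggregate of the empty multiset differs from that of the nonempty one, which holds generically for the $\text{MLP}_1/\text{MLP}_2$ parameterization. Granting the previous corollary, both directions then reduce to the elementary observation that the 1-hop information is a sub-tuple of $L(v)$, so no machinery beyond the construction of the separating example should be required.
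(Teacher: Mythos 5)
Your proof is correct and follows essentially the same route as the paper's: both directions rest on the injectivity corollary, with the ``at least as expressive'' part coming from the observation that the input to any 1-hop layer, $(\bm{h}_v, \{\!\!\{\bm{h}_u \mid u \in \mathcal{N}_1(v)\}\!\!\})$, is contained in (the paper says: is the $K=1$ special case of) the list that a \model~layer encodes injectively, and the strictness part coming from a separating example. The one substantive difference is the example used for strictness. The paper points to a pair of non-isomorphic graphs (Figure~\ref{fig:ex_dist}) that are indistinguishable by the \emph{full} 1-WL test run to convergence, which buys the stronger conclusion advertised elsewhere in the paper that \model~exceeds 1-WL itself; your pendant-path construction ($v$ with a degree-one neighbor versus $v'$ whose neighbor has a further pendant node) defeats any \emph{single} 1-hop layer but would be separated after two rounds of 1-WL, so it establishes only the layer-versus-layer claim --- which is, however, exactly what the stated corollary asks for. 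Your attention to the zero-padding/empty-multiset subtlety (ensuring $\bm{x}_{v',2} \neq \bm{0}$ so the injective RNN map does not collapse the padded sequence for $v$ with the genuine length-3 sequence for $v'$) is a point the paper's own proof glosses over, and handling it as you do is needed for your particular example, since it compares hops of different lengths.
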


\begin{proof}
We note that $1$-WL assumes an injective mapping of $1$-hop neighborhood, i.e., $(\bm{h}_v, \{\!\!\{\bm{h}_u ~|~ u \in \mathcal{N}_1(v)\}\!\!\})$,
which is a special case of \model~($K = 1$).
When $K>1$, the output of one \model~layer at node $v$, given the injectivity of the linear RNN and AGG, provides a more detailed characterization of its neighborhood than $1$-hop message passing.
This means that if $v$'s $1$-hop neighborhood changes, the output of the \model~layer will also be different.
Therefore, \model~is able to distinguish any two non-isomorphic graphs that are distinguishable by $1$-WL.
Moreover, \model~can distinguish two non-isomorphic graphs which $1$-WL cannot (see Figure~\ref{fig:ex_dist} in the appendix for an example).
\end{proof}

We note that \citeauthor{feng2022how}~\yrcite{feng2022how}~have already proven that multi-hop MPNNs are more expressive than 1-WL,
but are upper bounded by 3-WL, which also applies to our model.
Different from them,
we achieve such expressiveness with a compact and parameter-efficient architecture (i.e., the number of parameters does not increase with $K$),
which is of independent interest and bridges the gap between theory and practice.

\section{Experiments}
In this section,
we evaluate our model on widely used graph benchmarks~\cite{dwivedi2023benchmarking, dwivedi2022long}.
In all experiments,
we train our model using the Adam optimizer with weight decay~\cite{loshchilov2018decoupled} and use the cosine annealing schedule with linear warm-up for the first $5\%$ epochs.
We compare our model against popular MPNNs including GCN~\cite{kipf2016semi}, GAT~\cite{velivckovic2017graph}, 
GIN~\cite{xu2018powerful}, GatedGCN~\cite{bresson2017residual},
and multi-hop MPNN variants~\cite{feng2022how, michel2023path, gutteridge2023drew},
as well as several state-of-the-art graph transformers including Graphormer~\cite{ying2021transformers},
SAT~\cite{chen2022structure}, GPS~\cite{rampavsek2022recipe}, Graph MLP-Mixer~\cite{he2023generalization} and GRIT~\cite{ma2023graph}.
We also measure the training time and memory consumption of \model~to demonstrate its high efficiency.
We use three distinct colors to indicate the performance of our model, the best MPNN, and the best graph transformer.
We detail the hyper-parameters used for our model in the appendix (\cref{tab:hp1}).
In \cref{app:TUDataset},
we validate \model's robustness to over-squashing and compare \model~with SPN~\cite{abboud2022shortest}.

\begin{table}[t]
    \centering
    \caption{Test MAE on ZINC 12K with parameter budget $\approx$ 500K.}
    \label{tab:zinc}
    \vspace{1mm}
    \begin{tabular}{lc}
    \toprule
    Model  & Test MAE $\downarrow$ \\ \midrule
    GCN~\cite{kipf2016semi}    & 0.278$\pm$0.003  \\
    GAT~\cite{velivckovic2017graph}    & 0.384$\pm$0.007  \\
    GIN~\cite{xu2018powerful}    & 0.387$\pm$0.015  \\
    GatedGCN~\cite{bresson2017residual}     &  0.282$\pm$0.015 \\ 
    PNA~\cite{corso2020principal}       &  0.188$\pm$0.004 \\ 
    KP-GIN~\cite{feng2022how} & 0.093$\pm$0.007 \\
    PathNN~\cite{michel2023path}   & \textcolor{darkpink}{\textbf{0.090$\pm$0.004}} \\ \midrule
    SAN~\cite{kreuzer2021rethinking}          &  0.139$\pm$0.006 \\
    Graphormer~\cite{ying2021transformers}   &  0.122$\pm$0.006 \\
    K-subgraph SAT~\cite{chen2022structure}   & 0.094$\pm$0.008 \\  
    GPS~\cite{rampavsek2022recipe} & 0.070$\pm$0.004 \\ 
    Graph MLP-Mixer~\cite{he2023generalization} & 0.073$\pm$0.001 \\
    GRIT~\cite{ma2023graph}       & \textcolor{darkblue}{\textbf{0.059$\pm$0.002}} \\ \midrule
    \model~(Ours) & \textcolor{darkorange}{\textbf{0.077$\pm$0.002}} \\
    \bottomrule
    \end{tabular}
\end{table}

\paragraph{Benchmarking GNNs.}
We first evaluate our model on the node classification datasets: PATTERN and CLUSTER, and graph classification datasets: MNIST and CIFAR10 from~\cite{dwivedi2023benchmarking}.
To get the representation for the entire graph,
we simply do average pooling over all node representations.
Our model doesn't use any positional encoding.
We train our model four times with different random seeds and report the average accuracy with standard deviation.
The results are shown in Table~\ref{tab:accuracy}.
From the table we see that graph transformers generally perform better than MPNNs.
Among the four datasets,
PATTERN models communities in social networks and all nodes are reachable within 3 hops,
which we conjecture is why the performance gap between graph transformers and MPNNs is only marginal.
For a more difficult task, like CIFAR10, that requires information from a relatively larger neighborhood,
graph transformers work more effectively.
\model~performs well on all four datasets
and consistently outperforms MPNNs.
Notably,
on MNIST and CIFAR10,
\model~achieves the best accuracy, outperforming state-of-the-art models Graph MLP-Mixer and GRIT,
which validates that our model can effectively aggregate information beyond the local neighborhood.

\paragraph{ZINC 12K.} Next, we report the test MAE of our model on ZINC 12K~\cite{dwivedi2023benchmarking}.
The average MAE and standard deviation of four runs with different random seeds are shown in Table~\ref{tab:zinc} 
along with baseline performance from their original papers.
From Table~\ref{tab:zinc} we can observe that the performance of our model is significantly better than 
that of existing MPNNs.
In particular,
\model~outperforms other multi-hop MPNN variants~\cite{feng2022how, michel2023path},
which shows our architecture is more effective in aggregating multi-hop information.
Comparing \model~with graph transformers,
we find that it outperforms several graph transformer variants (SAN, Graphormer, and K-subgraph SAT)
and approaches the state-of-the-art model.
This is impressive given that our model doesn't require any positional encoding.
These results evidence that our model can encode graph structural information through the natural inductive bias of recurrence.

\begin{table}[t]
\centering
\caption{Test performance on Peptides-func/struct.}
\label{tab:lrgb}
\vspace{1mm}
\begin{tabular}{lcc}
\toprule
\multirow{2}{*}{Model} & Peptides-func              & Peptides-struct              \\ 
                        & Test AP $\uparrow$         & Test MAE $\downarrow$        \\ \midrule
 GCN$^*$                    & 0.6860$\pm$0.0050          & \textcolor{darkpink}{\textbf{0.2460$\pm$0.0007}}            \\
 GINE$^*$                   & 0.6621$\pm$0.0067          & 0.2473$\pm$0.0017            \\
 GatedGCN$^*$               & 0.6765$\pm$0.0047          & 0.2477$\pm$0.0009            \\ 
 PathNN                     & 0.6816$\pm$0.0026 & 0.2540$\pm$0.0046 \\
 DRew                       & 0.6996$\pm$0.0076 & 0.2781$\pm$0.0028 \\
 DRew+LapPE                 & \textcolor{darkpink}{\textbf{0.7150$\pm$0.0044}} & 0.2536$\pm$0.0015 \\ \midrule
 %GatedGCN+RWSE          & \textcolor{darkpink}{\textbf{0.6069$\pm$0.0035}}          & \textcolor{darkpink}{\textbf{0.3357$\pm$0.0006}}            \\ \midrule
 %Transformer+LapPE      & 0.6326$\pm$0.0126          & 0.2529$\pm$0.0016            \\
 SAN+LapPE              & 0.6384$\pm$0.0121          & 0.2683$\pm$0.0043            \\
 GPS                    & 0.6535$\pm$0.0041          & 0.2500$\pm$0.0005            \\
 Graph-MLPMixer & 0.6970$\pm$0.0080 & 0.2475$\pm$0.0015 \\
 GRIT                   & \textcolor{darkblue}{\textbf{0.6988$\pm$0.0082}} & \textcolor{darkblue}{\textbf{0.2460$\pm$0.0012}}            \\ \midrule
 \model~(Ours)          & \textcolor{darkorange}{\textbf{0.7085$\pm$0.0027}}          & \textcolor{darkorange}{\textbf{0.2503$\pm$0.0019}}            \\ 
 \model+LapPE           & \textcolor{darkorange}{\textbf{0.7133$\pm$0.0011}}          & \textcolor{darkorange}{\textbf{0.2455$\pm$0.0013}}            \\ \bottomrule
 \end{tabular}
 \end{table}

\paragraph{Long Range Graph Benchmark.}
To further test the long-range modeling capability of \model,
we evaluate it on the Peptides-func and Peptides-struct datasets from~\cite{dwivedi2022long}.
We follow the 500K parameter budget and train our model four times with different random seeds.
The results are displayed in Table~\ref{tab:lrgb}.
The performance of GCN, GINE and GatedGCN (marked with $*$) comes from a recent report~\cite{tonshoff2023did} that extensively tuned their hyper-parameters \textit{with} positional encoding.
Performance of other baselines is reported by respective papers.
We can observe that,
even without positional encoding,
\model~significantly outperforms all baselines except DRew+LapPE on Peptides-func,
and its performance on Peptides-struct also matches that of the best graph transformer.
Note that on Peptides-struct, DRew+LapPE performs worse than \model.
These results demonstrate the strong long-range modeling capability of our architecture itself.
As a supplement,
we test \model+LapPE by concatenating Laplacian positional encoding with node features,
and we find it slightly improves the performance.
We leave the combination of more advanced positional encoding with \model~to future work.

\begin{figure}[t]
 \centering
 \includegraphics[width=0.7\linewidth]{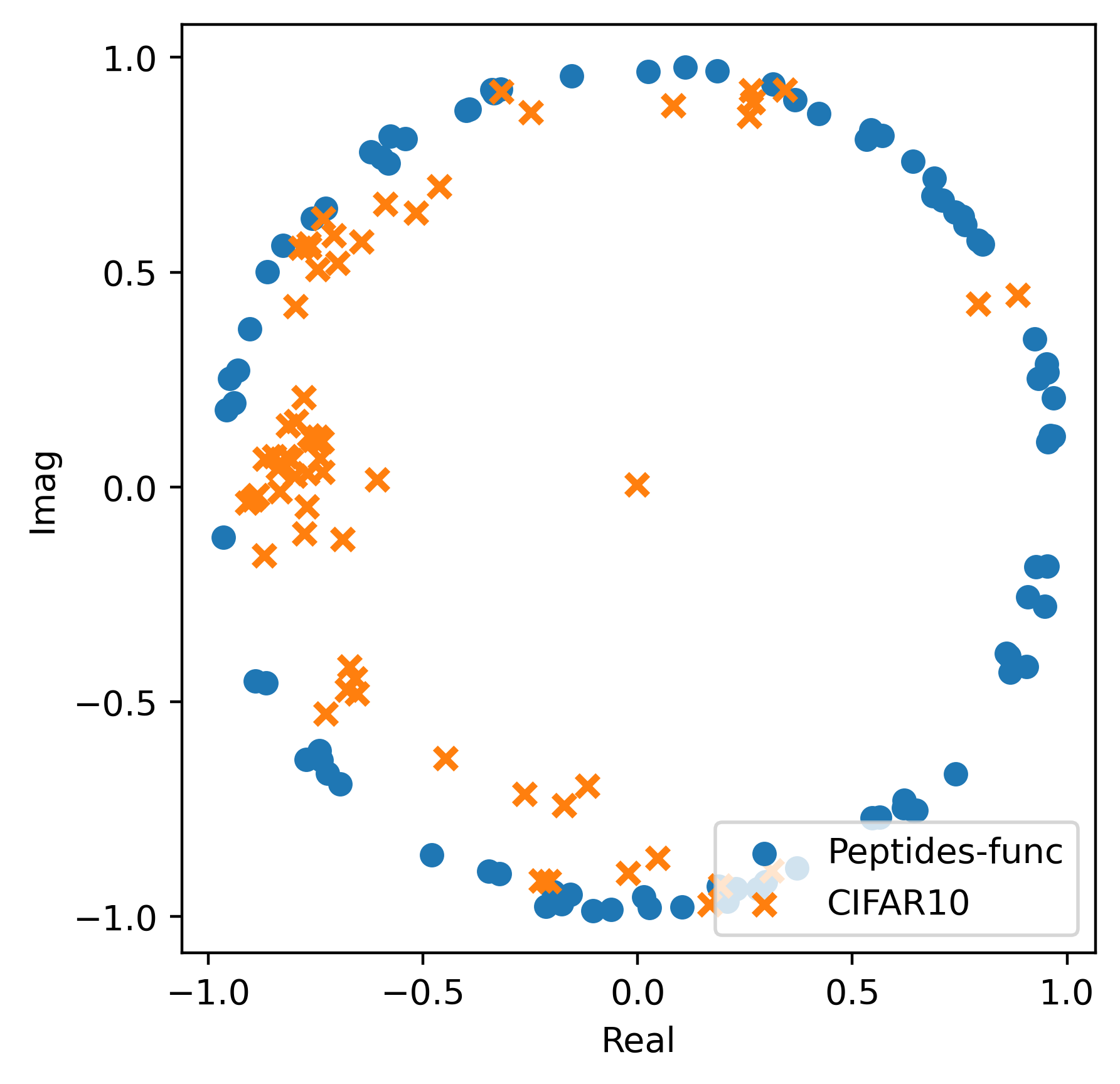}
 \caption{Learned (complex) eigenvalues of the first \model~layer on CIFAR10 and Peptides-func.}
 \label{fig:eig}
\end{figure}

To illustrate how \model~can learn to preserve long-range information,
we examine the eigenvalues learned by the linear RNN (i.e., $\mLambda$ in \cref{eq:diag}) after training,
as shown in \cref{fig:eig}.
We observe from the figure that the eigenvalues are pushed close to $1$ for the long-range task Peptides-func,
which prevent the signals of distant nodes from decaying too fast.
Compared with Peptides-func,
CIFAR10 requires the model to utilize more information from the local neighborhood,
so the magnitudes of the eigenvalues become smaller.

\begin{table}[h]
\centering
\caption{Average training time per epoch and GPU memory consumption for GRIT and \model.}
\label{tab:speed}
\vspace{-2mm}
\begin{tabular}{cccc}\\\toprule  
Model & ZINC 12K     & CIFAR10 & Peptides-func \\ \midrule
\multirow{2}{*}{GRIT}  & 23.9s    & 244.4s  & 225.6s  \\
                       & 1.9GB    & 4.6GB   & 22.5GB  \\ \midrule
\multirow{2}{*}{\model}  & 3.7s    & 27.8s   & 158.9s    \\
                                & 1.5GB   & 1.4GB   & 18.5GB    \\ \midrule
Speedup & \textbf{6.5$\times$}& \textbf{8.8$\times$} &  \textbf{1.4$\times$} \\ \bottomrule
\end{tabular}
\end{table}

\begin{figure}[h]
    \centering
    \includegraphics[width=0.48\textwidth]{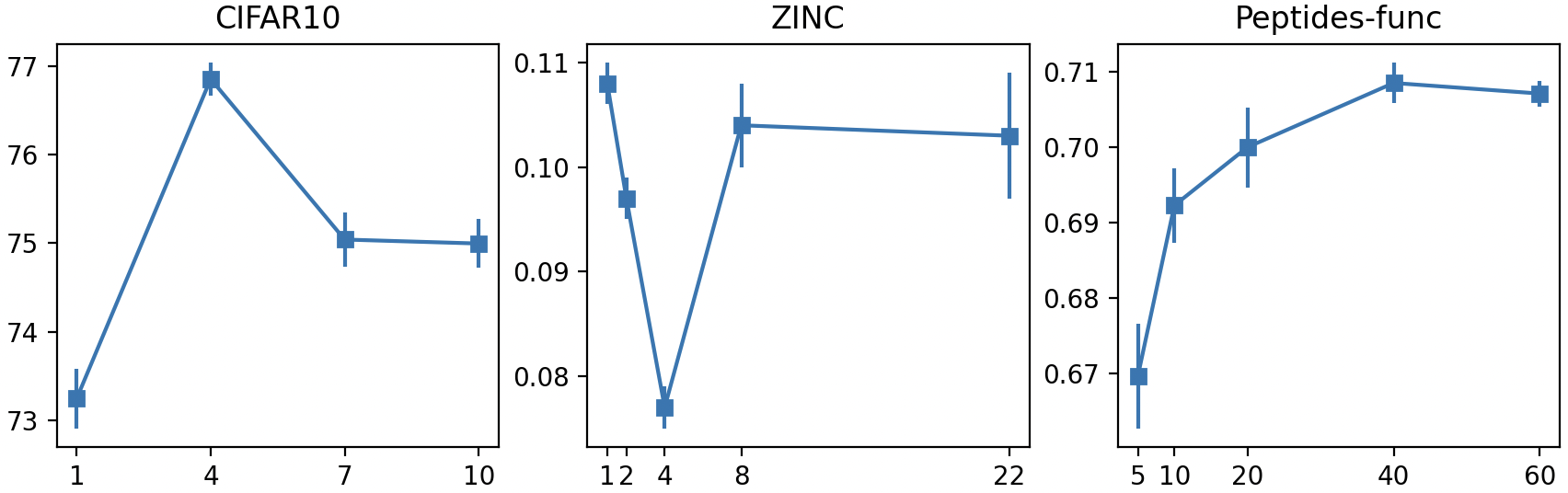}
    \caption{Effect of $K$ on performance.}
    \label{fig:K_ablation}
\end{figure}

\paragraph{Training efficiency.}
To demonstrate the high efficiency of our model,
we record the average training time per epoch and GPU memory consumption on ZINC, CIFAR10 and Peptides-func. 
We compare our measurements with those of the state-of-the-art graph transformer GRIT.
Both models are trained using the same batch size and on
a single RTX A5000 GPU with 24GB memory.
As shown in \cref{tab:speed},
our model improves the training efficiency by a huge margin,
which stems from our compact and parallelizable architecture design.

\paragraph{Effect of $K$ on performance.}
Recall that the length of recurrence $K$ can be regarded as a hyper-parameter in \model. 
In \cref{fig:K_ablation}, we show how different $K$ values affect the performance of \model~on CIFAR10, ZINC and Peptides-func,
keeping the depth and hidden dimension of the architecture unchanged (without positional encoding).
On CIFAR10 and ZINC,
while directly setting $K$ as the diameter already outperforms classical MPNNs,
we find that the optimal $K$ value that yields the best performance lies strictly between 1 and the diameter.
This may be because information that is too far away is less important for these two tasks (interestingly, the best $K$ value for CIFAR10 is similar to the width of a convolutional kernel on a normal image).
On Peptides-func,
the performance is more monotonic with $K$.
When $K=40$,
\model~outperforms the best graph transformer GRIT.
We observe no further performance gain on Peptides-func when we increase $K$ to 60.

\begin{figure}[h]
    \centering
    \includegraphics[width=0.48\textwidth]{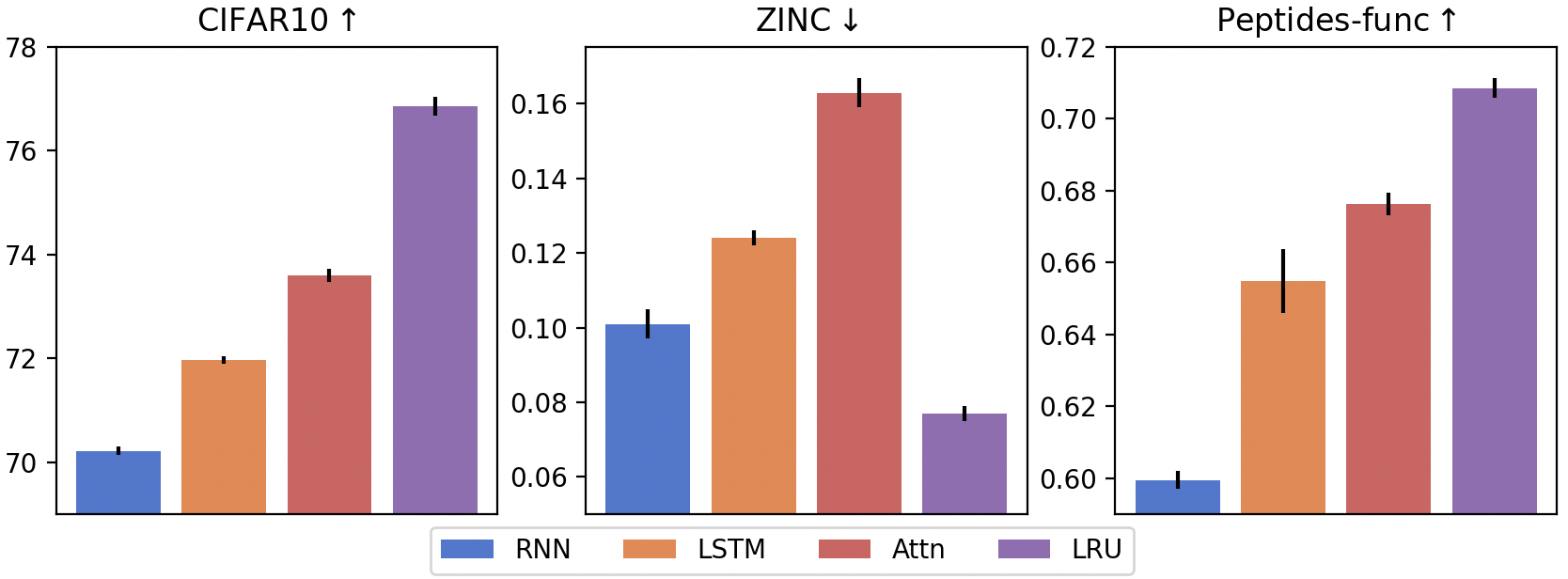}
    \caption{Performance of \model~using RNNs of different flavors.}
    \label{fig:rnn_ablation}
\end{figure}

\paragraph{Comparing RNNs of different flavors.}
Finally, we highlight the necessity of the LRU component (\cref{eq:diag}) of \model~by replacing it with a vanilla RNN,
a standard LSTM cell or $8$-head self-attention.
The performance of different variants is shown in \cref{fig:rnn_ablation}.
We use the same number of layers and $K$ for all models and tune the learning rate, weight decay and dropout rate in the same grid.
None of the variants use positional encoding.
We can observe that \model$_\text{LSTM}$ performs better than \model$_\text{RNN}$ on CIFAR10 and Peptides-func.
Since LSTM can alleviate the training instability of the vanilla RNN,
the improvement of \model$_\text{LSTM}$ over \model$_\text{RNN}$ is particularly large on the long-range dataset Peptides-func.
\model$_\text{Attn}$ allows direct interaction with each hop and thus also yields good performance on Peptides-func.
However, self-attention cannot provide good inductive bias because it cannot model the order of the hop sequence, which can explain why the performance of \model$_\text{Attn}$ is the worst on ZINC.
\model$_\text{LRU}$ consistently outperforms the other variants,
attributed to its advanced parameterization for stable signal propagation and great expressive power.

\section{Conclusion}

In this paper, we introduce the Graph Recurrent Encoding by Distance (GRED) model for graph representation learning. By integrating permutation-invariant neural networks with linear recurrent neural networks, GRED effectively harnesses information from distant nodes without the need for positional encoding or computationally expensive attention mechanisms. 
Theoretical and empirical evaluations confirm \model's superior performance compared with existing MPNNs and highly competitive results compared with state-of-the-art graph transformers at a higher training efficiency. This positions \model~as a powerful, efficient, and promising model for graph representation learning.

% Acknowledgements should only appear in the accepted version.
\section*{Acknowledgements}
We thank the anonymous reviewers for their valuable feedback,
which helped us improve the paper.
Antonio Orvieto acknowledges the financial support of the Hector Foundation.
Yuhui Ding would like to personally thank Jiaxin Zhang for her support during the stressful time before the deadline.

\section*{Impact Statement}

This paper presents work whose goal is to advance the field of 
Machine Learning. There are many potential societal consequences 
of our work, none of which we feel must be specifically highlighted here.

% In the unusual situation where you want a paper to appear in the
% references without citing it in the main text, use \nocite
%\nocite{langley00}

\bibliography{icml}
\bibliographystyle{icml2024}

%%%%%%%%%%%%%%%%%%%%%%%%%%%%%%%%%%%%%%%%%%%%%%%%%%%%%%%%%%%%%%%%%%%%%%%%%%%%%%%
%%%%%%%%%%%%%%%%%%%%%%%%%%%%%%%%%%%%%%%%%%%%%%%%%%%%%%%%%%%%%%%%%%%%%%%%%%%%%%%
% APPENDIX
%%%%%%%%%%%%%%%%%%%%%%%%%%%%%%%%%%%%%%%%%%%%%%%%%%%%%%%%%%%%%%%%%%%%%%%%%%%%%%%
%%%%%%%%%%%%%%%%%%%%%%%%%%%%%%%%%%%%%%%%%%%%%%%%%%%%%%%%%%%%%%%%%%%%%%%%%%%%%%%
\newpage
\appendix
\onecolumn
\section{Proof of \cref{thm:injection}}
\label{sec:proof}

\begin{proof}[Proof]

For now, let us assume for ease of exposition that all sequences are of length $K$.
Also, let us, for simplicity, omit the dependency on $v\in V$ and talk about generic sequences.

The proof simply relies on the idea of writing the linear recurrence in matrix form~\cite{gu2022parameterization, orvieto2023universality}. Note that for a generic input $\vx = (\vx_0,\vx_1, \vx_2,\dots, \vx_K)\in\R^{d\times (K+1)}$, the recurrence output can be rewritten in terms of powers of $\mLambda= \diag(\lambda_1,\lambda_2,\dots, \lambda_{d_s})$ as follows:
    \begin{equation}
        \vs_K =\sum_{k=0}^{K} \mLambda^k\mW_{\text{in}} \vx_{k}.
    \end{equation}    
    We now present sufficient conditions for the map $R:(\vx_0, \vx_1, \vx_2, \dots, \vx_K)\mapsto \vs_K$ to be injective or bijective. The proof for bijectivity does not require the set of node features to be in a countable set, and it is simpler. 
    
    \paragraph{Bijective mapping.} First, let us design a proper matrix $\mW_{\text{in}}\in\R^{d_s\times d}$. We choose $d_s=(K+1)d$ and set $\mW_{\text{in}} = \mI_{d\times d}\otimes {\bf{1}}_{(K+1)\times 1}$. As a result, the RNN will independently process each dimension of the input with a sub-RNN of size $(K+1)$. The resulting $\vs_K\in\R^{(K+1) d}$ will gather each sub-RNN output by concatenation. We can then restrict our attention to the first dimension of the input sequence:
    \begin{equation}
    (\vs_K)_{1:(K+1)} =\sum_{k=0}^{K} \diag(\lambda_1,\lambda_2,\dots, \lambda_{K+1})^k {\bf{1}}_{(K+1)\times 1} x_{k, 1}.
    \end{equation}
    This sum can be written conveniently by multiplication using a Vandermonde matrix:
    \begin{equation}
    (\vs_K)_{1:(K+1)} = \begin{pmatrix}
    \lambda_1^{K}&\lambda_1^{K-1} &\cdots& \lambda_1&1\\
    \lambda_2^{K}&\lambda_2^{K-1} &\cdots& \lambda_2&1\\
    \vdots&\vdots &\ddots&\vdots&\vdots\\
    \lambda_{K+1}^{K}&\lambda_{K+1}^{K-1} &\cdots& \lambda_{K+1}&1\\
    \end{pmatrix} \vx_{0:K, 1}^{\leftarrow}.
    \end{equation}
    where $\vx_{0:K, 1}^{\leftarrow}$ is the input sequence (first dimension) in reverse order.
    The proof is concluded by noting that Vandermonde matrices of size $(K+1)\times (K+1)$ are full-rank since they have non-zero determinant $\prod_{1\le i< j\le (K+1)}(\lambda_i-\lambda_j)\ne 0$, under the assumption that all $\lambda_i$ are distinct. Note that one does not need complex eigenvalues to achieve this, both $\mLambda$ and $\mW_{\text{in}}$ can be real. However, as discussed by~\citeauthor{orvieto2023universality}~\yrcite{orvieto2023universality}, complex eigenvalues improve conditioning of the Vandermonde matrix.

    \paragraph{Injective mapping.} The condition for injectivity is that if $\vx\ne\hat \vx$, then $R(\vx)\ne R(\hat \vx)$. In formulas,
    \begin{equation}
        \vs_K - \hat \vs_K =\sum_{k=0}^{K} \mLambda^k\mW_{\text{in}} (\vx_{k}-\hat\vx_{k})\ne \bm{0}
    \end{equation}
    Let us assume the state dimension coincides with the input dimension, and let us set $\mW_{\text{in}}=\mI_{d\times d}$. Then, we have the condition:
    \begin{equation}
        \vs_K - \hat \vs_K =\sum_{k=0}^{K} \mLambda^k(\vx_{k}-\hat\vx_{k})\ne \bm{0}.
    \end{equation}
    Since $\mLambda = \diag(\lambda_1,\lambda_2,\dots, \lambda_{d})$ is diagonal, we can study each component of $\vs_K - \hat \vs_K$ separately. We therefore require
    \begin{equation}
        s_{K,i} - \hat s_{K, i} =\sum_{k=0}^{K} \lambda^k_i(x_{k, i}-\hat x_{k, i})\ne 0\qquad \forall i\in\{1,2,\dots, d\}.
    \end{equation}
    We can then restrict our attention to linear one-dimensional RNNs~(i.e. filters) with one-dimensional input $\vx\in\R^{1\times (K+1)}$. We would like to choose $\lambda\in\C$ such that 
    \begin{equation}
        \sum_{k=0}^{K} \lambda^k(x_{k}-\hat x_{k})\ne 0
    \end{equation}
    Under the assumption $|\mathcal{X}|=N\le\infty$, $\vx-\bar \vx$ is a generic signal in a countable set~($N(N-1)/2 =\Omega(N^2)$ possible choices). Let us rename $\vz := \vx-\bar \vx\in \mathcal{Z}\subset\R^{1\times (K+1)}$, $|\mathcal{Z}| = \Omega(N^2)$. We need
    \begin{equation}
        \langle \bar \vlambda, \vz\rangle \ne 0,\qquad \forall \vz\in\mathcal{Z},\qquad \text{where} \quad \bar \vlambda = (1, \lambda, \lambda^2,\cdots, \lambda^K)
    \end{equation}

Such $\lambda$ can always be found \textit{in the real numbers}, and the reason is purely geometric. We need
$$\bar \vlambda\notin \mathcal{Z}_\perp := \bigcup_{\vz\in\mathcal{Z}}\vz_\perp.$$
Note that $\dim(\vz_\perp) = K$, so $\dim(\mathcal{Z}_\perp) = K$ due to the countability assumption --- in other words the Lebesgue measure vanishes: $\mu(\mathcal{Z}_\perp; \R^{K+1})=0$. If $\bar \vlambda$ were an arbitrary vector, we would be done since we can pick it at random and with probability one $\bar \vlambda\notin \mathcal{Z}_\perp$. But $\bar \vlambda $ is structured~(lives on a 1-dimensional manifold), so we need one additional step.

\begin{figure}
    \centering
    \includegraphics[width=0.4\textwidth]{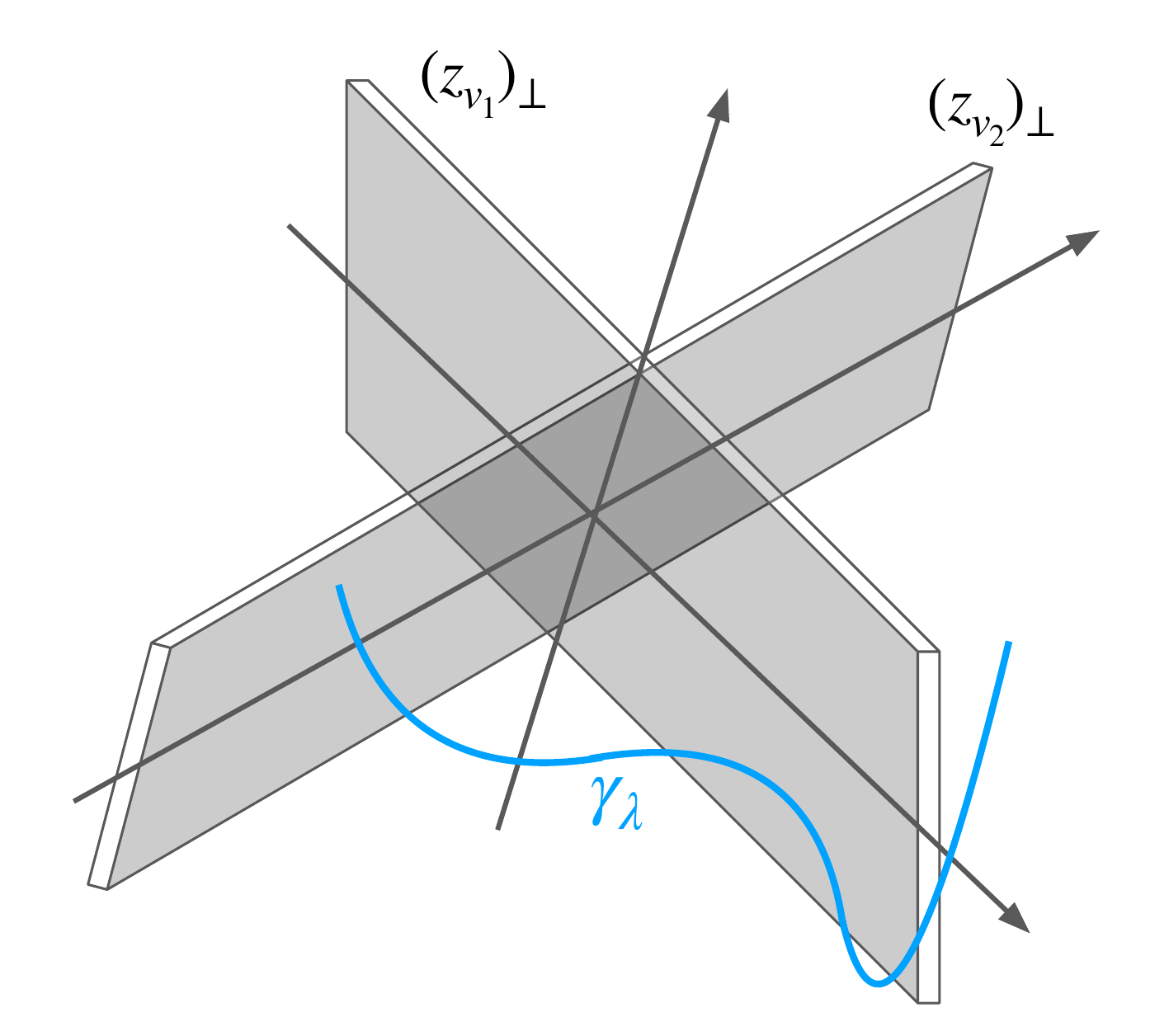}
    \caption{Proof illustration for \cref{thm:injection}. The set $\mathcal{Z}_\perp$ is depicted as union of hyperplanes, living in $\R^{K+1}$ and here sketched in three dimensions. The curve $\gamma_\lambda: \lambda \mapsto (1, \lambda, \lambda^2,\cdots, \lambda^K)$ is shown as a blue line. The proof shows that, for $\lambda\in\R$, the support of $\gamma_\lambda$ is not entirely contained in $\mathcal{Z}_\perp$.}
    \label{fig:gamma_a}
\end{figure}

Note that $\bar \vlambda$ is parametrized by $\lambda$, and in particular $\mathbb{R}\ni \lambda\mapsto \bar \vlambda \in\mathbb{R}^{K+1}$ is a curve in $\mathbb{R}^{K+1}$, we denote this as $\gamma_\lambda$. Now, crucially, note that the support of $\gamma_\lambda$ is a smooth curved manifold for $K>1$. In addition, crucially, $\bm{0} \notin \gamma_\lambda$. We are done: it is impossible for the $\gamma_\lambda$ curve to live in a $K$ dimensional space composed of a union of hyperplanes; it indeed has to span the whole $\mathbb{R}^{K+1}$, without touching the zero vector~(see Figure~\ref{fig:gamma_a}). The reason why it spans the whole  $\mathbb{R}^{K+1}$ comes from the Vandermonde determinant! Let $\{\lambda_1,\lambda_2,\cdots, \lambda_{K+1}\}$ be a set of $K+1$ distinct $\lambda$ values. The Vandermonde matrix
\begin{equation*}
    \begin{pmatrix}
    \lambda_1^{K}&\lambda_1^{K-1} &\cdots& \lambda_1&1\\
    \lambda_2^{K}&\lambda_2^{K-1} &\cdots& \lambda_2&1\\
    \vdots&\vdots &\ddots&\vdots&\vdots\\
    \lambda_{K+1}^{K}&\lambda_{K+1}^{K-1} &\cdots& \lambda_{K+1}&1\\
    \end{pmatrix}
\end{equation*}
has determinant $\prod_{1\le i< j\le (K+1)}(\lambda_i-\lambda_{j})\ne 0$ --- it's full rank, meaning that the vectors $\bar \vlambda_1, \bar \vlambda_2,\dots, \bar \vlambda_{K+1}$ span the whole $\mathbb{R}^{K+1}$. Note that $\lambda\mapsto \bar \vlambda$ is a continuous function, so even though certain $\bar \vlambda_i$ might live on $\mathcal{Z}_\perp$ there exists a value in between them which is not contained in $\mathcal{Z}_\perp$. 
\end{proof}

\begin{figure}[h]
    \centering
    \includegraphics[width=0.5\textwidth]{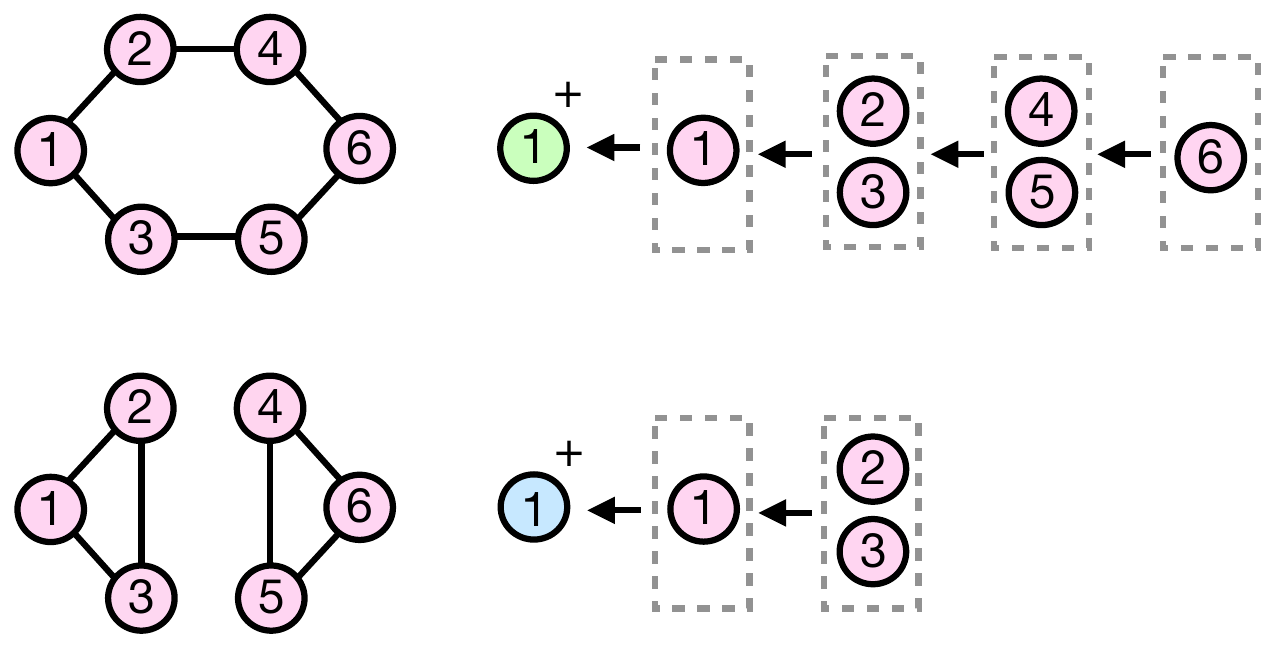}
    \caption{\model~provides distinct updates for the two graphs above. Such graphs, however, are indistinguishable by the 1-WL isomorphism test, assuming~(worst-case) nodes have identical features.}
    \label{fig:ex_dist}
\end{figure}

\begin{table}[h]
    \centering
    \caption{Hyper-parameters for \model. For PATTERN and CLUSTER, $K$ is the diameter of the graph. For \model+LapPE in \cref{tab:lrgb},
the Laplacian PE uses the 10 smallest eigenvectors and a hidden dimension of 16.}
    \label{tab:hp1}
    \vspace{1mm}
    \begin{tabular}{cccccccc}
    \toprule
    Hyper-parameter & ZINC 12K & MNIST & CIFAR10 & PATTERN & CLUSTER  & Peptides-func & Peptides-struct\\ \midrule
    Layers      & 11       & 4     & 8       & 10      & 16       & 8 & 4 \\
    $K$            & 4        & 2     & 4       & -       & -        & 40 & 4 \\
    Dropout        & 0.2      & 0.15  & 0.15    & 0.2     & 0.2      & 0.2 & 0.2\\
    $d$            & 72       & 128   & 96     & 64      & 64       & 88 & 128 \\
    $d_s$          & 72       & 96   & 64     & 64      & 64       & 88 & 96 \\
    Learning rate  & 0.001    & 0.001 & 0.001   & 0.001   & 0.001    & 0.001 & 0.001 \\
    Weight decay   & 0.1      & 0.1   & 0.1     & 0.1     & 0.2      & 0.2 & 0.2 \\
    Epochs      & 2000     & 600   & 600     & 100     & 100      & 200 & 200 \\
    Batch size     & 32       & 16    & 16     & 32      & 32       & 32 & 32 \\ \bottomrule
    \end{tabular}
\end{table}

\section{Additional Results}
\label{app:TUDataset}
To validate the robustness of \model~to over-squashing,
we consider the Tree-NeighborsMatch task proposed by~\citeauthor{alon2020bottleneck}~\yrcite{alon2020bottleneck}.
Following the same experimental setup as \citeauthor{alon2020bottleneck}~\yrcite{alon2020bottleneck},
we report the training accuracy of \model~in \cref{tab:tree_neighbor_match} to show how well \model~can harness long-range information to fit the data.
As a comparison, we quote the performance of GIN which uses the same multiset aggregation as \model.
For GIN, a network with $r+1$ layers is trained for each tree depth in the original paper \cite{alon2020bottleneck}, while for \model~the number of layers is only around half of the tree depth, with an appropriate $K > 1$ to avoid under-reaching.
Over-squashing starts to affect GIN at $r=4$, preventing the model from effectively using distant information to perfectly fit the data. On the contrary, \model~is not affected by over-squashing across different tree depths.

We further evaluate \model~on NCI1 and PROTEINS from TUDataset.
We follow the experimental setup of SPN~\cite{abboud2022shortest}, and report the average test accuracy and standard deviation across $10$ train/val/test splits, as shown in Table~\ref{tab:tud}.
We use the same $K$ for \model~as for SPN and cite the performance reported by the SPN paper~\cite{abboud2022shortest}.
Our model generalizes well to TUDataset and shows good performance.
Furthermore,
\model~outperforms SPN~\cite{abboud2022shortest} with the same number of hops, which verifies that \model~is a better architecture for aggregating large neighborhoods.

\begin{minipage}{.5\textwidth}
    \centering
    \captionof{table}{Accuracy across tree depths.}
    \vspace{1mm}
    \label{tab:tree_neighbor_match}
    \begin{tabular}{cccccccc}
    \toprule
    Model  & $r=2$ & $3$ & $4$ & $5$ & $6$ & $7$ & $8$ \\ \midrule
    GIN    & 1.0   & 1.0 & 0.77 & 0.29 & 0.20 & - & - \\
    \model & 1.0   & 1.0 & 1.0  & 1.0  & 1.0  & 1.0 & 0.95 \\ \bottomrule
    \end{tabular}
\end{minipage}
\begin{minipage}{.49\textwidth}
    \centering
    \captionof{table}{Performance (accuracy) of \model~on TUDataset.}
    \vspace{1mm}
    \label{tab:tud}
    \begin{tabular}{lcc}
    \toprule
    Model &  NCI1          & PROTEINS \\ \midrule
    DGCNN &  76.4$\pm$1.7  & 72.9$\pm$3.5 \\ 
    DiffPool & 76.9$\pm$1.9 & 73.7$\pm$3.5 \\
    ECC      & 76.2$\pm$1.4 & 72.3$\pm$3.4 \\
    GIN      & 80.0$\pm$1.4 & 73.3$\pm$4.0 \\
    GraphSAGE & 76.0$\pm$1.8 & 73.0$\pm$4.5 \\
    SPN ($K=10$) & 78.2$\pm$1.2 & 74.5$\pm$3.2 \\
    \model~($K=10$) & \textbf{82.6$\pm$1.4} & \textbf{75.0$\pm$2.9} \\
    \bottomrule
    \end{tabular}
\end{minipage}

%%%%%%%%%%%%%%%%%%%%%%%%%%%%%%%%%%%%%%%%%%%%%%%%%%%%%%%%%%%%%%%%%%%%%%%%%%%%%%%
%%%%%%%%%%%%%%%%%%%%%%%%%%%%%%%%%%%%%%%%%%%%%%%%%%%%%%%%%%%%%%%%%%%%%%%%%%%%%%%

\end{document}